\title{Variational \beston Alignment}
\author{Afra Amini \quad Tim Vieira \quad Elliott Ash \quad Ryan Cotterell \\
ETH Z\"{u}rich \\
$\{$\texttt{\href{mailto:afra.amini@inf.ethz.ch}{afra.amini}, }\texttt{\href{mailto:ryan.cotterell@inf.ethz.ch}{ryan.cotterell}}$\}$\texttt{@inf.ethz.ch}\\
\texttt{\href{mailto:tim.f.vieira@gmail.com}{tim.f.vieira@gmail.com}}\,\,\,\,
\texttt{\href{mailto:ashe@ethz.ch}{ashe@ethz.ch}}
}
\begin{document}
\maketitle
\begin{abstract}
\beston (\bon) is a popular and effective algorithm for aligning language models to human preferences.
The algorithm works as follows: at inference time, $N$ samples are drawn from the language model, and the sample with the highest reward, as judged by a reward model, is returned as the output. 
Despite its effectiveness, \bon is computationally expensive; it reduces sampling throughput by a factor of $N$. 
To make \bon more efficient at inference time, one strategy is to fine-tune the language model to mimic what \bon does during inference. 
To achieve this, we derive the distribution induced by the \bon algorithm. 
We then propose to fine-tune the language model to minimize backward KL divergence to the \bon distribution.
Our approach is analogous to mean-field variational inference and, thus, we term it variational \bon (\vbon).
To the extent this fine-tuning is successful and we end up with a good approximation, we have reduced the inference cost by a factor of $N$. 
Our experiments on controlled generation and summarization tasks show that \bon is the most effective alignment method, and our variational approximation to \bon achieves the closest performance to \bon and surpasses models fine-tuned using the standard KL-constrained RL objective. In the controlled generation task, \vbon appears more frequently on the Pareto frontier of reward and KL divergence compared to other alignment methods. In the summarization task, \vbon achieves high reward values across various sampling temperatures.
\newline
 \newline
 \vspace{0.1em}
  \hspace{7.0em}\includegraphics[width=1.05em,height=1.0em]{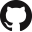}\hspace{.3em}\parbox{\dimexpr\linewidth-2\fboxsep-2\fboxrule}
  {\url{https://github.com/rycolab/vbon}}
\end{abstract}

\section{Introduction}
Language models are pre-trained on large corpora to model a distribution over natural language text.\footnote{Many language models are also used to model text in non-natural languages, e.g., programming languages.}
Beyond their initial pre-training, they are often additionally fine-tuned on domain-specific data through a process called \defn{supervised fine-tuning (SFT)}.
The goal of SFT is to enable the model to better perform various downstream tasks of interest. 
While the fine-tuned model, called the \defn{reference model} in our paper, is indeed typically much better at performing the downstream task of interest, e.g., dialogue generation or summarization, it may still generate undesirable content, e.g., harmful or offensive text.
To mitigate this issue, \defn{aligning} the reference model to human preferences has become a fundamental step in the development of modern large language models \citep{llama2, gpt4, gemini}.\looseness=-1

The degree to which text is aligned with human preferences is typically operationalized using a real-valued reward function. 
Rather than constructing a reward function by hand, it is typically estimated from a dataset of human preferences.\footnote{In some cases, the reward model is not estimated from human preference data. It is either known, e.g., code-based execution scores, or given by a classifier, e.g., toxicity or sentiment classifiers.}
And, after estimation, we expect the reward function to return higher values for text that is more likely to be preferred by humans, and lower values for text that is more likely to be dispreferred.
Then, given an estimated reward function, an alignment algorithm further alters the reference models in a manner such that it places the highest probability on the text that is high reward under the reward model \emph{and} high probability under the reference model.\looseness=-1

Alignment algorithms can be taxonomized into two groups: (i) alignment via fine-tuning, where we change the language model's parameters to achieve alignment \citep{rlhf, dpo}, and (ii) alignment via inference \citep{webgpt, mudgal2024controlled}. 
A common alignment-via-fine-tuning method is \defn{reinforcement learning from human feedback} \citep[\defn{RLHF};][]{rlhf, summarizehf, instructgpt}.
RLHF typically consists of further fine-tuning the language model under a \defn{KL-constrained RL objective}, which is made up of two terms: a term that encourages the model to maximize the reward, and a term that discourages high KL divergence between the language model and the reference model.
This objective is often maximized with an RL algorithm, e.g., proximal policy optimization \citep[PPO;][]{ppo}.
A common alignment-via-inference method is the \beston \citep[Bo$N$;][]{summarizehf} algorithm.
As such, it does \emph{not} require any fine-tuning of the language model.
The algorithm is straightforward: One draws $N$ samples from the reference model and returns the text that achieves the highest reward among those $N$ samples. 
The \bon algorithm has also been effectively applied in controlled decoding \citep{fudge, mudgal2024controlled} and to generate a dataset for supervised fine-tuning \citep{llama2}.\looseness=-1

Despite its simplicity, \bon has proven incredibly practical in generating high-reward text that still has a high probability under the reference model. 
Theoretically, \citet{yang2024asymptotics} prove that under some simplifying assumptions, the \bon distribution is asymptotically equivalent to the optimal distribution under the KL-constrained RL objective. 
Empirically, it has been repeatedly shown \citep{pmlr-v202-gao23h, dpo, mudgal2024controlled} that \bon often appears on the frontier of reward and KL curves, surpassing the performance of models fine-tuned with RLHF.
However, the main factor preventing \bon from replacing fine-tuning methods for alignment is its significant computational overhead during inference. 
Even when sampling is done in parallel, \bon decreases the text generation throughput by a factor of $N$. 
This drawback limits its practicality for generating text from large language models.\looseness=-1
\begin{figure*}[t]
    \includegraphics[width=\textwidth]{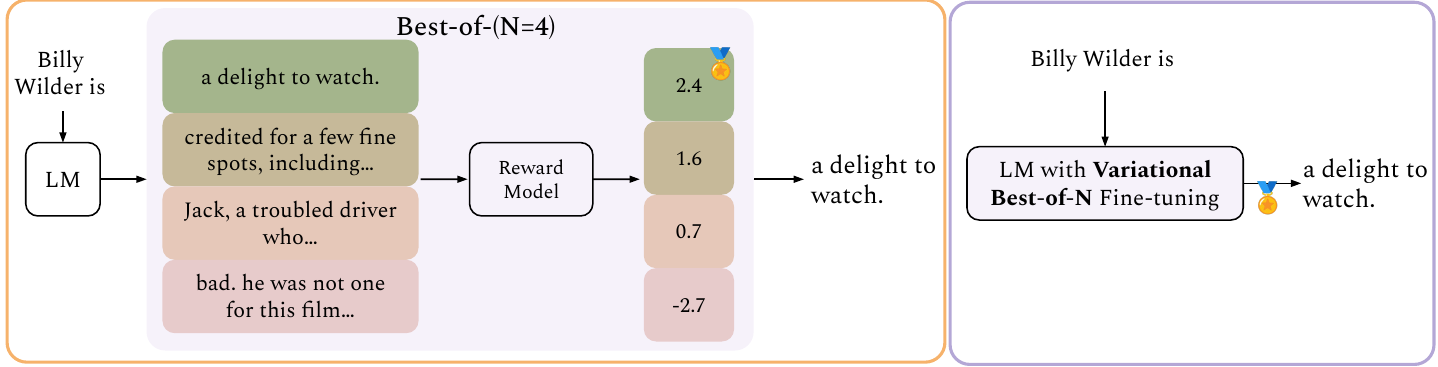}
    \caption{\beston (on the left) is an effective alignment-via-inference method: it draws $N$ samples from the language model, ranks them according to a reward model, and outputs the best sample. Variational \beston (on the right) approximates this process via fine-tuning. The goal is to ensure that sampling a single string from the fine-tuned model produces a result equivalent to applying \beston. This approach allows us to achieve similar performance while increasing the throughput by a factor of $N$.}
    \label{fig:fig1}
\end{figure*}

To speed up \bon, we devise a scheme to convert it into an alignment-via-fine-tuning algorithm rather than an alignment-via-inference algorithm. 
To this end, we first formally derive the probability distribution induced by the \bon algorithm. 
Then we approximate this distribution by minimizing the reverse KL divergence between the language model and the \bon distribution. This leads to an optimization objective that we refer to as the \vbon objective. By analyzing a lower bound of this objective, we find that it behaves similarly to the KL-regularization objective in the limit, i.e., $N \rightarrow 1$ or $N \rightarrow \infty$. Importantly, the \vbon objective has a unique and useful property: it is insensitive to applying any monotonically increasing function to the reward values. This distinctive feature, along with the empirical success of the \bon algorithm, suggests that the \vbon objective is a promising and interesting objective to explore.
Finally, we fine-tune the language model using PPO to optimize the \vbon objective. 
Our scheme, depicted in \Cref{fig:fig1}, allows us to achieve performance close to that of the \bon algorithm while increasing the inference throughput by a factor of $N$.\looseness=-1

We experiment with \vbon on controlled generation and summarization tasks, comparing its performance to models fine-tuned using the KL-constrained RL objective.
For controlled generation, our results indicate that models fine-tuned with the \vbon objective are more likely to fall on the Pareto frontier of the reward vs. KL curve compared to other fine-tuning-based alignment methods. This suggests that \vbon achieves a better balance between maximizing reward and maintaining proximity to the reference model.
On a summarization task, fine-tuning with \vbon yields higher reward values and greater win rates on average than models fine-tuned with the KL-constrained RL objective, further demonstrating its effectiveness.\looseness=-1

\section{Background: Reinforcement Learning from Human Feedback}
Let $\alphabet$ be an \defn{alphabet}, a finite, non-empty set of symbols.\footnote{Please refer to \Cref{tab:notation} for a summary of notations used throughout the paper.} The elements of $\alphabet$ may be characters, tokens, or words; the choice lies with the modeler.
A \defn{string} is a finite sequence of symbols drawn from $\alphabet$.
A \defn{language model} is a distribution over strings $\str \in \alphabet^*$, where $\alphabet^*$ is the set of all strings over the alphabet $\alphabet$. 
In this paper, we consider language models, e.g., those based on neural networks, that are parameterized by a real vector $\btheta \in \params$, denoted as $\policy$.
Furthermore, we restrict ourselves to language models that are differentiable functions of $\btheta$.
In conditional generation tasks, e.g., summarization or dialogue generation, it is desirable to prompt the language model with a string $\prompt \in \alphabet^*$. 
Consequently, we consider prompted language models, i.e., those that give a conditional distribution over response strings $\str$, given a prompt string $\prompt$, as $\policy(\str \mid \prompt)$.  However, for notational convenience, we will drop the explicit conditioning on the prompt $\prompt$ and simply write $\policy(\str)$.

Algorithms for RLHF fine-tune the language model to increase the expected reward of the strings sampled from it while not diverging too far from the reference model. 
RLHF consists of three steps. 
First, the language model is fine-tuned on a task-specific dataset using the maximum-likelihood objective. 
Recall we term the language model after this step the reference model and show that with $\pzero$. 
Next, a \defn{reward model} $\reward \colon \alphabet^* \rightarrow \R$ is trained to capture human preferences; the reward of a string is high if it is preferred by humans.\footnote{For example, in a summarization task, a preference dataset consists of a document, two candidate summaries for that document, and a label indicating which summary is preferred by humans. The reward model is trained on this dataset to maximize the likelihood of correctly predicting human preference.} Finally, the reference model is fine-tuned to maximize the KL-constrained RL objective,
\begin{align} \label{eq:rlobj}
    \rlobj(\btheta) = \E_{\str \sim   \policy}\Big[ \reward(\str) \Big] 
    - \beta\, \kl\big(\policy \, \| \, \pzero\big), 
\end{align}
where $\kl(\cdot)$ is the KL divergence between two distributions, modulated by a hyperparameter $\beta$. This objective encourages the model to assign greater probability mass to high-reward outputs while simultaneously penalizing excessive divergence from the reference model.
\citet{levine2018reinforcement} shows that the policy that maximizes\footnote{This formulation implicitly assumes that there exists a $\btheta \in \params$ that achieves the unconstrained maximum.} this objective (\cref{eq:rlobj}) is
\begin{align} \label{eq:rlopt}
    \polstar(\str) = \frac{1}{Z}\, \pzero(\str) \exp \Big( \frac{1}{\beta} r(\str) \Big), \quad  Z = \sum_{\str \in \alphabet^*} \pzero(\str) \exp \Big( \frac{1}{\beta} r(\str) \Big).
\end{align}
In simple terms, $\polstar$ is the reference model reweighted by the exponentiated reward values and normalized by the partition function $Z$.
However, direct sampling from $\polstar$ is not feasible, as computing $Z$ requires evaluating an infinite sum, making it intractable.
However, a heuristic approach to sampling from $\polstar$ would be to sample many strings from $\pzero$ and only keep those that have high rewards. 
Indeed, this heuristic is the motivation behind the \bon algorithm.\looseness=-1

\section{Deriving the \beston Objective}
\beston is a simple alignment-via-inference algorithm.
The algorithm works as follows. 
Let $Y_N = \{\str^{(n)}\}_{n=1}^N$ be the multi-set containing $N$ i.i.d. samples from $\pzero$. 
Then, \bon returns $\str^{\star}$, where\footnote{We assume that the $\argmax$ is unique, or ties are broken in a well-defined manner.}
\begin{equation}
    \str^{\star} = \argmax_{\str^{(n)} \in Y_N} \reward(\str^{(n)}).
\end{equation}
We present the probability distribution induced by \bon with $\bonpolicy$.
Notably, $\bonpolicy$ is \emph{not} the optimal distribution under \Cref{eq:rlobj}, the KL-constrained RL objective.\footnote{Under simplifying assumptions is $\bonpolicy$ asymptotically (in string length) equivalent to $\polstar$ \citep{yang2024asymptotics}.\looseness=-1}
Despite this, the \bon algorithm often performs well—even in comparison to RLHF-based methods.
This naturally raises the question: under what optimization objective is $\bonpolicy$ the optimal distribution?
To answer this question, we first compute the probability of strings under $\bonpolicy$.
\begin{restatable}{proposition}{bondist}\label{proposition:bondist}
Suppose $\reward \colon \alphabet^* \rightarrow \R$ is a one-to-one mapping.
Then, the probability of a string $\str$ under $\bonpolicy$ is given by
\begin{equation} \label{eq:bonprob}
    \bonpolicy(\str) = \sum_{i=1}^N {N \choose i} \cdf\big(\reward(\str)\big)^{N-i} \pzero(\str)^i, \quad \quad  \cdf\big(\reward(\str)\big) \defeq \prob_{\str' \sim \pzero}\left(\reward(\str') < \reward(\str)\right).
\end{equation}
\end{restatable}
\begin{proof}
    See \Cref{sec:bondist}.
\end{proof}
$\cdf$ can be understood as the strict cumulative density function of reward values under $\pzero$. In other words, $\cdf\big(\reward(\str)\big)$ represents the probability that a random sample drawn from $\pzero$ has a reward value less than $\reward(\str)$. We now describe how to fine-tune the language model to approximate $\bonpolicy$. Similar to variational inference, we minimize the reverse KL divergence between $\policy$ and $\bonpolicy$. Concretely,
\begin{subequations} \label{eq:bonobj}
\begin{align}
    \bonobj(\btheta) = - \kl \big(\policy \mid\mid \bonpolicy \big) &= \E_{\str \sim \policy} \Big[\log \bonpolicy(\str) - \log \policy(\str) \Big]  \\
    &= \E_{\str \sim \policy} \Big[\log \bonpolicy(\str)\Big] + \entropy\big(\policy\big)  \\
    &=\E_{\str \sim \policy} \Big[\log \sum_{i=1}^N {N \choose i} \cdf\big(\reward(\str)\big)^{N-i} \pzero(\str)^i \Big] + \entropy\big(\policy\big),
\end{align}
\end{subequations}
where $\entropy(\cdot)$ is the entropy of a distribution. 
Thus, \Cref{eq:bonobj} offers an answer to the question of what objective \bon optimizes. 
Inspecting the objective further, we see that \Cref{eq:bonobj} is an entropy-regularized objective, where we use the probability of the string under the \bon distribution as the reward and discourage the model from having low entropy. 

\paragraph{Monotonically invariant.} An important property of the variational \bon objective is that it is invariant to applying a strictly monotonically increasing function to rewards. This is because the \vbon objective relies on reward values solely through $\cdf$, which, as defined in \Cref{eq:bonprob}, only depends on the ranking between the reward values and not their exact magnitude.
This suggests that the \vbon objective may be less sensitive to outliers and the scale of rewards. This property is important as RL algorithms are notoriously sensitive to the scale of reward values \citep{hendersonetal, schaul2021}.\looseness=-1

\paragraph{Approximating $\log \cdf(\cdot)$.}
Maximizing \Cref{eq:bonobj} requires us to compute $\log \cdf(\cdot)$ for any $\reward(\str)$. This, however, is computationally expensive, as we have to sum over the probabilities of all strings that have rewards less than $\reward(\str)$. Fortunately, we can instead maximize a lower bound of \Cref{eq:bonobj} using a Monte Carlo estimator of $\cdf(\cdot)$.
Concretely, we can write $\cdf(\cdot)$ as an expectation,
\begin{equation}
    \cdf\big(\reward(\str)\big) = \E_{\str' \sim \pzero} \big[\mathbbm{1}\{\reward(\str') < \reward(\str) \} \big].
\end{equation}
We approximate $\cdf\big(\reward(\str)\big)$ using $M$ i.i.d. samples from $\pzero$, termed $\str'^{(1)}, \ldots, \str'^{(M)} \mathrel{\stackrel{\textnormal{\tiny i.i.d.}}{\sim}} \pzero$, using which we compute   $\widehat{\cdf}\big(\reward(\str)\big) \defeq \frac{1}{M} \sum_{m=1}^M \mathbbm{1}\{\reward(\str'^{(m)}) < \reward(\str) \}$.
We then take the $\log$ of this Monte Carlo estimator as a biased, but consistent estimator of $\log \cdf(\cdot)$ in \Cref{eq:bonobj}.\footnote{Using Jensen's inequality, we show biasedness. 
Concretely, note the following lower bound
\begin{subequations}
\begin{align}
    \log \cdf\big(\reward(\str)\big) 
    &= \log \E_{\str'^{(1)}, \ldots, \str'^{(M)}} \left[\frac{1}{M} \sum_{m=1}^M \mathbbm{1} \{\reward(\str'^{(m)}) < \reward(\str)\} \right]  \\
    &\geq \E_{\str'^{(1)}, \ldots, \str'^{(M)}} \left[\log \left(\frac{1}{M} \sum_{m=1}^M \mathbbm{1} \{\reward(\str'^{(m)}) < \reward(\str) \}\right) \right],
\end{align}
\end{subequations}
where Jensen's inequality is applicable because $\log$ is concave. 
Consistency can be shown with an application of the delta method \citep[\S 5.5.4;][]{casella_berger_2001}.
}
In \Cref{sec:ferr}, we empirically assess the number of samples needed for $\log \widehat{\cdf}$ to accurately approximate $\log \cdf$.\looseness=-1

\section{Comparing the \bon and RL Objectives}
To explore the connection between the \vbon objective and the KL-regularized RL objective, we derive a lower bound for $\bonobj$. Through this lower bound, we hope to achieve a deeper insight into how the reward function is used in the variational \bon objective, and why this objective discourages high KL divergence from the reference model.

To derive such a lower bound, we substitute the \bon distribution in \Cref{eq:bonprob} into the \vbon objective in \Cref{eq:bonobj}. We then simplify this objective to arrive at the following theorem.

\begin{restatable}{theorem}{bonltwo}\label{theorem:bonltwo} 
We have $\bonobj(\btheta) \geq \ltwo(\btheta)$, where
    \begin{align} \label{eq:l2}
        \ltwo(\btheta) \defeq
        (N-1)\E_{\str \sim \policy}\Big[ \log \cdf\big(\reward(\str)\big) \Big] 
        - \kl\big(\policy \, \| \, \pzero\big). 
    \end{align}
\end{restatable}
\begin{proof}
    See \Cref{sec:bonltwo}.
\end{proof}
Empirically, we observe that models that are fine-tuned to maximize $\ltwo(\btheta)$ perform competitively to the ones that are fine-tuned to maximize the \vbon objective; see \Cref{sec:boundscompare} for experimental results. 
Interestingly, if we compare \Cref{eq:l2} to the KL-constrained RL objective, \Cref{eq:rlobj}, we see they have a very similar structure. We observe that $N$ (in the \vbon objective) acts as a regularization parameter. 
As $N \rightarrow 1$, the optimal distribution gets closer to $\pzero$, which has the same effect as $\beta \rightarrow \infty$ in \Cref{eq:rlobj}. 
Furthermore, as $N \rightarrow \infty$, the optimal distribution only generates the string with the maximum rewards, which is equivalent to $\beta \rightarrow 0$ in \Cref{eq:rlobj}. 
Importantly, in both limits, the optimal distribution under the KL-regularized RL objective and the \vbon objective are equivalent. 

The main difference between the KL-constrained RL objective \Cref{eq:rlobj} and the derived \vbon lower bound \Cref{eq:l2} is in how the reward function is used. The KL-constrained RL objective aims to maximize the expected reward values, whereas \vbon maximizes the cumulative probability that strings sampled from the aligned model, $\policy$, achieve higher rewards compared to those sampled from $\pzero$.\looseness=-1

\section{Sentiment Control}
\begin{figure*}[t]
     \centering
     \begin{subfigure}[b]{0.49\textwidth}
         \includegraphics[width=\linewidth]{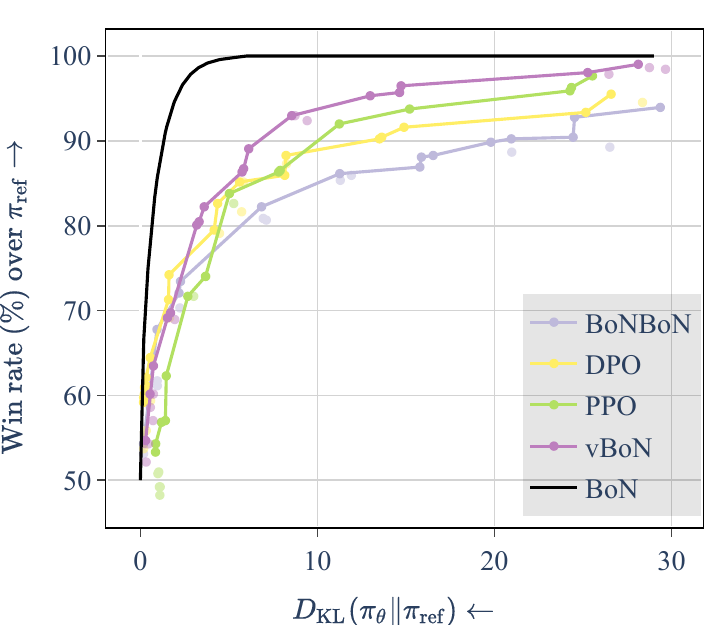}
         \caption{$4\%$ of points on Pareto front belong to BoNBoN, $4\%$ to PPO, $42\%$ to DPO, and $50\%$ to \vbon.} %
         \label{fig:imdb-winrate}
     \end{subfigure}
     ~
     \begin{subfigure}[b]{0.49\textwidth}
         \includegraphics[width=\linewidth]{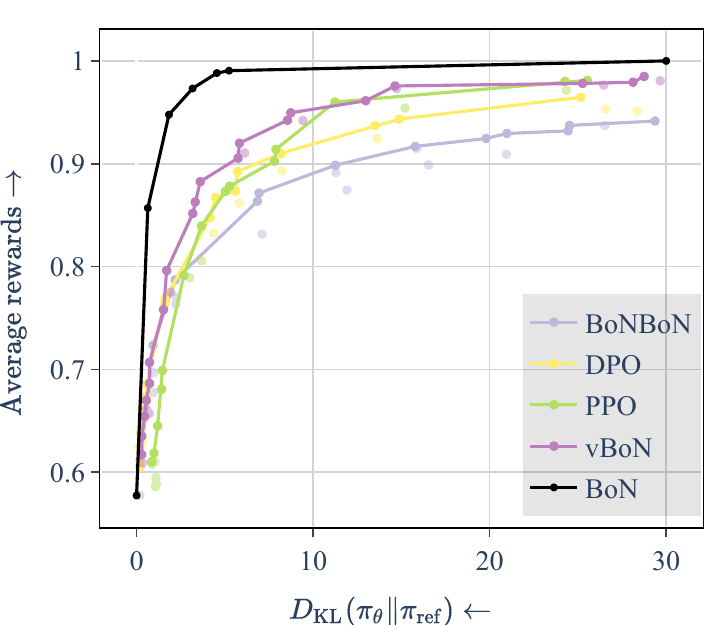}
         \caption{$7\%$ of points on Pareto from belong to BoNBoN, $10\%$ DPO, $33\%$ PPO, and $50\%$ \vbon.} \label{fig:imdb-rewards}
     \end{subfigure}
    \caption{Steering generated movie reviews towards positive sentiment. Points that are not on the Pareto front of each method have lower opacity. \bon is the most effective approach in achieving high win rates and high rewards while not diverging too far from the reference model. Our variational approximation to \bon gets closest to the performance of \bon compared to other fine-tuning methods, as reflected in the percentage of times it appears on the Pareto front. 
    }
    \label{fig:imdb}
\end{figure*}
We now employ the variational \bon objective, \Cref{eq:bonobj}, to fine-tune language models. 
We perform an open-ended text generation task where the goal is to generate movie reviews with positive sentiment.\looseness=-1

The reference model, $\pzero$, is \texttt{GPT-IMDB}\footnote{Specifically, we use \url{https://huggingface.co/lvwerra/gpt2-imdb}.}, a \texttt{GPT-2} \citep{radford2019language} model fine-tuned on \textsc{imdb} corpus \citep{imdb}. We use a binary sentiment classifier,\footnote{Specifically, we use \url{https://huggingface.co/lvwerra/distilbert-imdb}.} denoted as $\classifier$, with two classes $\{\textsc{pos}, \textsc{neg}\}$ as the reward model, and define $\reward(\str) \defeq \classifier(\textsc{pos} \mid \str)$. Following \citet{dpo}, we sample $5000$ movie reviews from the training set of \textsc{imdb} dataset and for each sample, we randomly choose a prefix length from $\{2, \ldots, 8\}$ and take that prefix as the prompt. We further generate $512$ prompts in the same way from the test set of \textsc{imdb} that we use to evaluate our models.

We fine-tune the reference model with PPO using the \vbon objective \Cref{eq:bonobj}. Then, we compare the performance of the fine-tuned model (\defn{\vbon}) to the exact \bon (\defn{\bon}), i.e., applying \bon at inference time. 

We implement and compare the following existing methods for language model alignment:
\begin{itemize}[leftmargin=*]
    \item \textbf{\bon-SFT:} Perhaps the most straightforward way to approximate \bon distribution is to fine-tune the model to maximize the likelihood of the samples taken with \bon algorithm. Unfortunately, we find that SFT is incapable of achieving a good trade-off between achieving high rewards and low KL divergence, see \Cref{sec:sft-exp} (\Cref{fig:imdb-sft}) for the experimental results.
\item \textbf{PPO:} We use PPO to optimize the KL-constrained objective in \Cref{eq:rlobj}. We use the default hyperparameters in \texttt{trlx} library \citep{trlx} for fine-tuning with PPO.
\item \textbf{DPO.} Direct preference optimization \citep[DPO;][]{dpo} is a popular alternative to RLHF that does not require training a reward model. Following DPO's experimental setup, we generate $6$ reviews per prompt and use the resulting $12$ pairwise comparisons per prompt to construct DPO's contrastive loss.\footnote{One could argue that DPO has a slight advantage over other methods in this setup since it has seen $6$ unique generations per prompt during training, while the others only have seen one (or $2$ with BoNBoN). Nevertheless, we observe that \vbon is more effective than DPO.}  
\item \textbf{BoNBoN:} Concurrent work \citep{bonbon} explores another approach to approximate \bon distribution. Assuming that the reference model distribution $\pzero$ is continuous, \citet[Theorem 3;][]{bonbon} prove that the expected difference between the relative likelihood, i.e., $\frac{\bonpolicy(\cdot)}{\pzero(\cdot)}$, of the \beston response and the Worst-of-$N$ response is $\frac{1}{2 \beta} = \frac{1}{(N-1) \sum_{k=1}^{N-1} 1/k}$.
They use this property to construct a loss function similar to that of IPO \citep{ipo}. Furthermore, they add another term to the loss function, which simply maximizes the likelihood of the \beston response. The final loss function is a convex combination of the IPO-like loss and the negative log-likelihood loss, regulated by a hyperparameter $\alpha$.\footnote{Following the authors' recommendation, we set $\alpha$ so that both terms contribute equally to the final loss.}
\end{itemize}

We fine-tune models by varying the degree of regularization. For \bon approaches, that is achieved by varying $N$, and for DPO and PPO, we vary $\beta$.\footnote{See \Cref{sec:hyper} for more details regarding the regularization hyperparameters.} Conveniently, $N$ in \vbon is a hyperparameter, meaning that we do \emph{not} need to generate more samples from $\pzero$ when we increase $N$. However, with \bon and BoNBoN methods, we need to increase the number of samples from the reference model as we increase $N$. 

We generate movie reviews based on prompts from our test set using fine-tuned models and then measure three metrics: (i) KL divergence between the fine-tuned model and the reference model; (ii) win rate, defined as the percentage of times the fine-tuned model's generations receive higher rewards compared to the reference model's generations; and (iii) average rewards obtained by the fine-tuned model's sampled strings. 

For the \bon method, we report the empirical upper bound of $\log N - \frac{N-1}{N}$ for KL divergence \citep{beirami2024, mroueh2024} in our plots. Furthermore, the win rate of \bon over the reference model can be computed analytically and is equal to $\frac{N}{N+1}$. 

We visualize the win rate vs. KL curves in \Cref{fig:imdb-winrate}, and \Cref{fig:imdb-rewards} the average rewards of generations under $\policy$ vs. the KL divergence. As expected, \bon is the most effective approach; however, this comes at an extra inference cost that grows with $N$. We observe that among the fine-tuning methods, our variational approximation to \bon gets closest to the performance of \bon, as it appears more often on the Pareto front of the two curves compared to other methods. Notably, we observe that DPO performs better than PPO in terms of win rates but worse in terms of average rewards; this could be attributed to the contrastive nature of DPO's loss function.

\subsection{Error in Estimating \protect\ensuremath{\log \cdf(\cdot)}} \label{sec:ferr}
We empirically quantify the error when estimating $\log \cdf(\cdot)$ with a finite number of i.i.d samples from $\pzero$. To get a better intuition on the error of our estimators, in \Cref{fig:logf}, we visualize the estimators for $3$ different prompts: one adversarial prompt (left plot), where the prompt itself has a negative sentiment, one neutral prompt (middle plot), and one prompt with a positive sentiment (right plot). We vary the number of Monte Carlo samples from $10$ to $600$. We observe that for all the $3$ prompts, the estimated CDF hardly changes after $200$ samples. When using the adversarial prompt, the reward distribution is negatively peaked, and the estimated CDF does not change after taking only $100$ samples.

We then quantify the change in the estimator by performing a two-sample Kolmogorov–Smirnov test \citep{Hodges1958TheSP}. This test measures the closeness of two empirical cumulative distribution functions. Concretely, the test statistic is
\begin{equation}
    \sup_{\str \in \alphabet^*} \left|\widehat{\cdf}_{M_1}\big(\reward(\str)\big) - \widehat{\cdf}_{M_2}\big(\reward(\str)\big) \right|,
\end{equation}
where $\widehat{\cdf}_{M_1}$ and $\widehat{\cdf}_{M_2}$ are estimated CDFs from $M_1$ and $M_2$ samples respectively. The statistics show the magnitude of the difference between the two empirical distributions of samples. The null hypothesis is that the two distributions are identical.

\begin{wraptable}[12]{r}{5.5cm}
\captionsetup{font=small}
\caption{Measuring the estimation error with increasing the sample size. After $250$ samples, the estimated CDF is unchanged for all the prompts.}
\adjustbox{max width=5.5cm}{%
\begin{tabular}{@{}lccc@{}}\toprule
$M$ & Rejection rate & Test statistics & $p$-value \\ \midrule
$5$ & $6.14\%$ & $0.63$ & $0.02$ \\
$20$ & $4.02\%$ & $0.33$ & $0.03$ \\
$100$ & $1.14\%$ & $0.17$ & $0.02$ \\
$200$ & $0.06\%$ & $0.12$ & $0.02$ \\
$250$ & $0$ & - & - \\
\bottomrule
\end{tabular}}
\label{tab:ferr}
\end{wraptable}
In \Cref{tab:ferr}, for each sample size $M$, we compare the estimated CDF with $M$ samples to the estimated CDF with $600$ samples. If the two distributions are identical according to the test, we can reliably use the $M$ sample to estimate the CDF. We report the number of prompts (out of $5000$ prompts) for which we reject the null hypothesis, meaning that the distributions are not identical. Furthermore, for those prompts, we report the average test statistics and $p$-values. In general, for very few prompts, the null hypothesis is rejected. Moreover, with $250$ samples, the estimated CDFs are identical to the estimated CDF with $600$ samples for all prompts.
\begin{figure}
    \centering
    \includegraphics[width=\columnwidth]{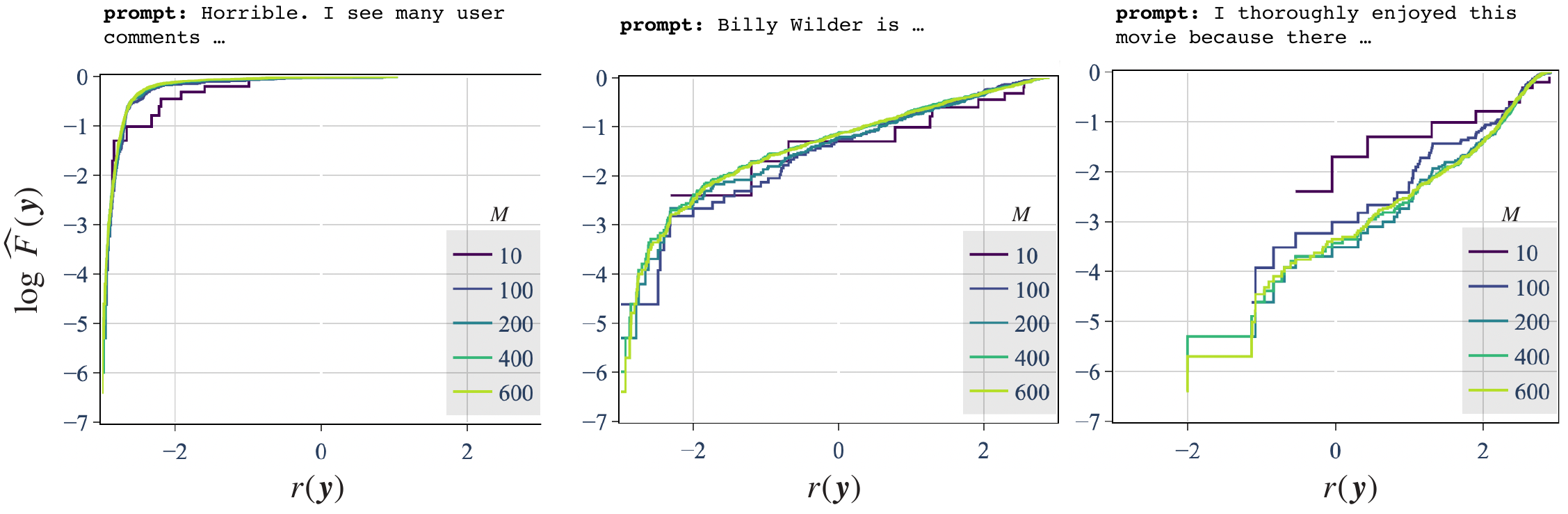}
    \caption{Estimates of $\log \cdf(\cdot)$ with increasing the number of Monte Carlo samples. We test an adversarial prompt (left plot), a neutral prompt (middle plot), and a prompt with a positive sentiment (right plot). Overall, we hardly see any difference between the estimates after taking $200$ samples. For the adversarial prompt, the distribution of rewards is peaked, and we do not see any changes in our estimator after taking only $100$ samples.
    }
    \label{fig:logf}
\end{figure}

\subsection{Efficiency Analysis}
We break down the efficiency analysis into $3$ main parts: (i) the inference cost, (ii) the preference optimization cost, (iii) and the preprocessing cost. 
\paragraph{Inference cost.} As discussed earlier, \vbon is an alignment-via-fine-tuning method, and along with other alignment-via-fine-tuning methods, it is $N$ times more efficient at inference compared to \bon.\looseness=-1

\paragraph{Optimization cost.} We compare \vbon's preference optimization cost to its closest alignment-via-fine-tuning counterpart, PPO. In the optimization loop, the main difference between PPO and \vbon is that \vbon requires computing the strict CDF function, $\cdf$, using $M$ samples. Crucially, $N$ in \vbon serves as a regularization hyperparameter, and increasing $N$ does \emph{not} incur additional computation costs. To implement \vbon efficiently, we precompute the $\cdf$ function before starting the optimization loop. This means the computational overhead is incurred only once, regardless of the number of optimization runs.\footnote{This is particularly advantageous since practitioners often perform the optimization multiple times to test various hyperparameter settings.} Since the $\cdf$ values are precomputed, we empirically observe that the time needed to run the \vbon optimization loop \emph{is the same as} running the PPO optimization loop, and the cost of evaluating $\cdf$ is negligible. Therefore, the main computational overhead in \vbon comes from precomputing $\log \cdf(\cdot)$.

\begin{wrapfigure}[19]{r}{0.45\textwidth}
    \centering
    \includegraphics[width=0.45\textwidth]{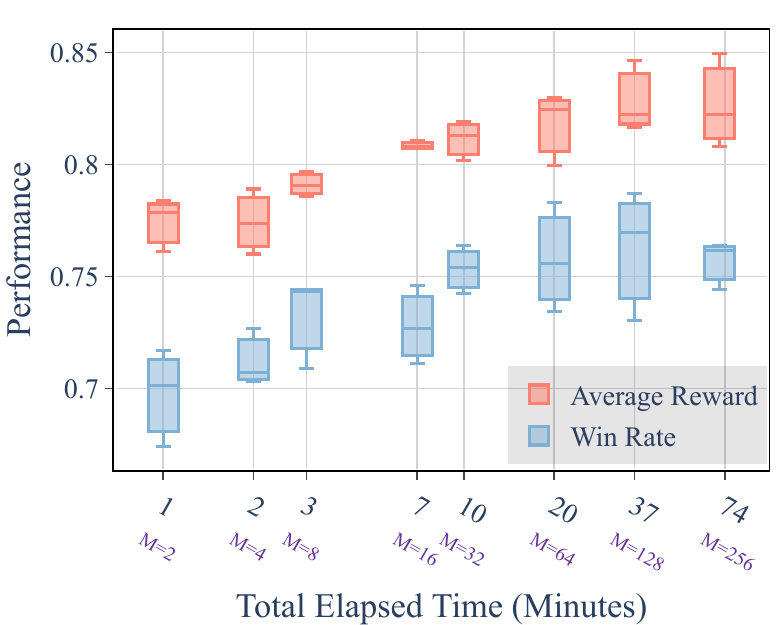}
    \caption{The average reward and win rate of the aligned models improve as we increase the sample size $M$ used for approximating the \vbon loss function.}
    \label{fig:imdb-eff}
\end{wrapfigure}
\paragraph{Preprocessing cost.} Estimating $\log \cdf(\cdot)$ requires only forward passes through the LLM and reward model without the need to compute and store gradients. This makes the process highly parallelizable. Our experiments utilize a memory-efficient library for LLM inference \citep[\textsc{vLLM};][]{kwon2023efficient}, which allows us to perform these approximations efficiently.\looseness=-1

We examine the impact of increasing the computational cost of \vbon by varying $M$, which directly affects the total elapsed time and downstream performance. For this analysis, we fix $N = 10$ and fine-tune the model using three random seeds. We report the average and standard deviation of reward values and win rates in \Cref{fig:imdb-eff} on a single \texttt{A100-40GB} GPU. Our results show that increasing $M$ generally improves the aligned model’s rewards and win rates. Notably, even with $M = 32$ samples (taking only $10$ minutes), the performance remains competitive with higher values of $M$. We hypothesize that the data efficiency of the simple Monte Carlo estimator can be improved by taking into account the similarity between different prompts to learn an approximation to $\log \cdf$ function, which we plan as future work.\looseness=-1

\section{Summarization}
We further employ variational \bon in a summarization task, where the goal is to generate summaries that align with human preferences. The reference model, $\pzero$, is a $\texttt{pythia-2.8B}$ model fine-tuned on human-written summaries of Reddit posts \citet{summarizehf}.\footnote{We use \url{https://huggingface.co/cleanrl/EleutherAI_pythia-2.8b-deduped__sft__tldr}.} We use SFT to refer to this model in the plots. We use two separate reward models for training and evaluation: a \texttt{pythia-2.8B}\footnote{We use \url{https://huggingface.co/cleanrl/EleutherAI_pythia-2.8b-deduped__reward__tldr}.} reward model for fine-tuning and a larger \texttt{pythia-6.9B}\footnote{We use \url{https://huggingface.co/cleanrl/EleutherAI_pythia-6.9b-deduped__reward__tldr}.} model exclusively for evaluation.

\paragraph{Dataset.} To evaluate the generalization ability of the aligned models on out-of-distribution data, we fine-tune the models using only posts from the \texttt{relationship} and \texttt{relationship\_advice} subreddits of the \texttt{Reddit TL;DR} \citep{summarizehf} dataset. We then assess the models' performance on the two types of data by dividing the test set into two equally-sized groups: in-distribution Reddit posts from the \texttt{relationship} and \texttt{relationship\_advice} subreddits, and out-of-distribution posts from the rest of the subreddits. We visualize the performance of methods on in-distribution data with a solid trace and on out-of-distribution data with a dashed trace.
\begin{figure*}[t]
     \centering
     \begin{subfigure}[b]{0.43\textwidth}
         \includegraphics[width=\linewidth]{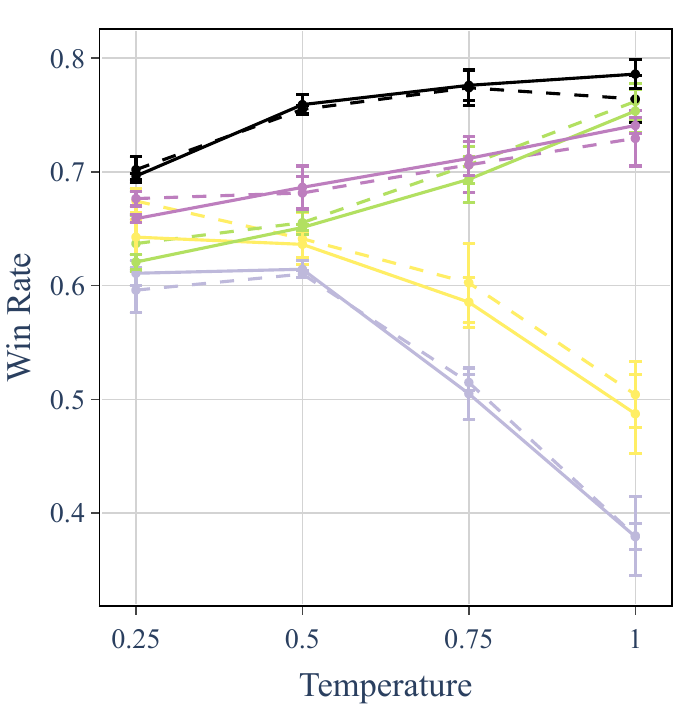}
         \caption{Comparing the win rates of alignment methods against samples from the $\pzero$. \vbon achieves closer results to \bon compared to other alignment-via-fine-tuning methods.} %
         \label{fig:tldr-winrate}
     \end{subfigure}
     ~
     \begin{subfigure}[b]{0.54\textwidth}
         \includegraphics[width=\linewidth]{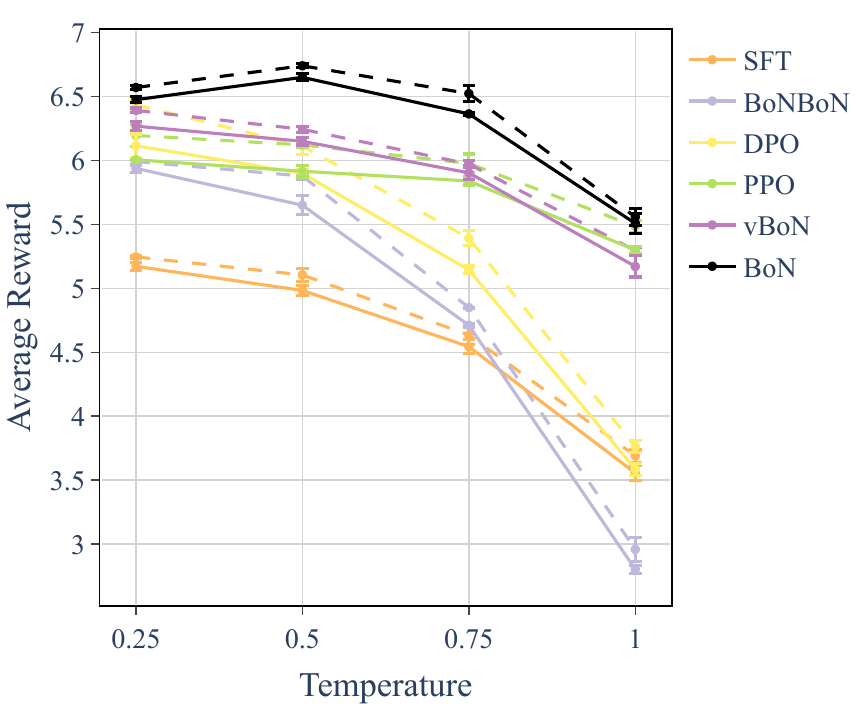}
         \caption{Comparing the average rewards obtained from the evaluator reward model. \bon outperforms other alignment methods, and \vbon achieves closer results to \bon compared to other alignment-via-fine-tuning methods.} \label{fig:tldr-rewards}
     \end{subfigure}
    \caption{Performance of different alignment methods on the summarization task. Solid traces show the performance on in-distribution Reddit posts, while dashed lines demonstrate the out-of-distribution performance. Overall, \bon is the most effective approach in achieving high win rates and average rewards across all sampling temperatures. Our variational approximation to \bon (\vbon) gets closest to the performance of \bon while being significantly cheaper at inference time.}
    \label{fig:tldr}
\end{figure*}

\paragraph{Experimental setup.} We fine-tune the model with both the KL-constrained RL objective and \vbon objective for $10000$ episodes. Similar to the previous experiment, we use $200$ samples to estimate $\log \cdf (\cdot)$ values. To create a smooth and continuous reward function, we further fit an exponential curve\footnote{We fit an exponential function of the form $f(x) = -a \exp(-bx)$ to the data using non-linear least squares.} to the estimates. We set $N=100$ for \bon and \vbon methods and the equivalent value of $\beta = 0.05$ for the KL-constrained RL objective. We closely follow \citet{huang2024the} for setting the hyperparameters of the PPO algorithm; please refer to \Cref{sec:hyper} for more experimental details. After fine-tuning, we sample from the aligned models with different sampling temperatures $t \in [0.25, 0.5, 0.75, 1.]$, each with $3$ different random seeds. 

\paragraph{Win rates.} In \Cref{fig:tldr-winrate}, we visualize the average and standard deviation of win rates compared against the samples from the SFT model. 
Notably, \bon achieves the highest win rates, which is consistent with findings from previous studies \citep{dpo}. We do not observe any significant differences between \bon performance on in-distribution (solid trace) and out-of-distribution data,\footnote{The difference between the two data distributions becomes more apparent at temperature 1, potentially due to increased sample diversity in this setting.} which is expected as \bon is an alignment-via-inference method. Similarly, we mostly do not observe significant differences between in- and out-of-distribution performance of all alignment-via-fine-tuning methods, indicating that these methods can generalize effectively in this experimental setup. DPO and BoNBoN only manage to perform competitively to other methods at lower temperatures (0.25, 0.5), and their performance drops significantly at higher temperatures (0.75, 1). Importantly, while PPO and \vbon perform comparably at higher temperatures, \vbon significantly outperforms PPO at lower temperatures (0.25 and 0.5).

\paragraph{Average rewards.} In \Cref{fig:tldr-rewards}, we measure the average rewards across different temperatures. As the temperature increases, the average reward decreases consistently across all methods. This trend is also evident in the qualitative analysis in \Cref{sec:qualitative}, where we show sampled summaries at different temperatures. DPO and BoNBoN suffer more from increasing the temperature, as the average rewards get close to (or even worse than) the SFT average rewards. Generally, the average reward results align with the win-rate trends, and we observe that \vbon achieves significantly higher rewards compared to PPO at lower temperatures. In \Cref{tab:example}, we show an example of summaries generated from the fine-tuned models with their associated reward values.

\begin{table}[t]
\captionsetup{font=small}
\caption{An example of summaries sampled at temperature $0.5$ and their corresponding reward obtained from the evaluator reward model.}
\label{tab:example}
\fontsize{9pt}{9pt}\selectfont
\centering 
\adjustbox{max width=12cm}{%
\begin{tabular}{@{}p{11cm}l@{}}\toprule
Content & Reward \\ \midrule
SUBREDDIT: r/relationship\_advice

TITLE: Stuck in a rut and in need of advice/inspiration!

POST: My boyfriend and I have been together for 3 years, and living together for 2. I'm quite the homebody, and when we first met, he was very outgoing and loved partying and socialising (although he was a student at the time). We're both working now, and most nights we find ourselves doing the same things: watching series (luckily we enjoy the same shows), playing Minecraft or playing various board games. We're tired after work, and can't bring ourselves to leave the house. The weekend is much the same -- lots of sleep, or sitting around staring at one screen or another. We do party occasionally (we'll head to a pub once every few months) and there are a few mutual friends we enjoy spending time with, but I worry that we've become stuck in our boring ways. I really enjoy our lifestyle, and would be quite happy to never leave the house again, but I'm starting to feel guilty for turning him into a 50 year-old when he's only 24. 
Any ideas for shaking things up a little? Bear in mind that we live in a small town in South Africa, and neither of us has a car. & - \\ \midrule 
{\color{SFT} SFT}: I'm stuck in a rut, and need to shake things up to see if it'll work out. Any advice? & 3.08 \\
{\color{PPO} PPO}: In need of inspiration to break out of rut and live life fully! Any ideas welcome! & 4.59 \\
{\color{VBON} \vbon}: Been happily living together for 2yr+, feeling bored after work regularly, looking for ideas to spice things up! & 6.79 \\
{\color{BON} \bon}: My boyfriend and I have been together for 3 years, and are both working full time. We spend most of our time in the house, and have become boring. What can we do to shake things up? & 9.18 \\ 
\bottomrule
\end{tabular}
}
\end{table}

\section{Conclusion}
Motivated by the effectiveness of the \bon algorithm, we formally derive a variational approximation to the distribution induced by \bon algorithm via fine-tuning language models. Our analysis highlights the similarities and distinctions between the variational \bon objective and the KL-constrained RL objectives. Our empirical findings reveal that models fine-tuned using the variational approximation to \bon not only attain high reward values but also maintain proximity to the reference models. Crucially, inference on the fine-tuned models with the \vbon objective remains as cost-effective as inference on the original reference model.

\section*{Acknowledgements}
We thank Ahmad Beirami for the fruitful discussion in the early stages of this project. We also thank 
Amrit Singh Bedi for identifying a typo in a previous version of the bound derivations. Finally, we thank the anonymous reviewers for their feedback. Afra Amini is supported by the ETH AI Center doctoral fellowship.

\clearpage
\bibliographystyle{iclr2024_conference}
\bibliography{custom}

\newpage
\appendix
\begin{table}[t]
    \centering
    \begin{tabular}{ccl}
    \toprule
    Symbol & Type & Explanation \\ \midrule
    $\alphabet$ & alphabet & $\alphabet$ is a set of symbols \\
    $\str$, $\str'$ & $\in \alphabet^*$ & strings in $\alphabet^*$ \\
    $\prompt$ & $\in \alphabet^*$ & prompt string in $\alphabet^*$ \\
    $\btheta$ & $\in \boldsymbol{\Theta}$ & A real vector representing the parameters of a language model \\
    $\policy$ & language model & A language model parameterized by $\btheta$ \\
    $\pzero$ & language model & A supervised-fine-tuned language model \\
    $\reward$ & $\alphabet^* \rightarrow \mathbb{R}$ & A reward model \\
    $\beta$ & $\mathbb{R}$ & Regularization parameter for the KL divergence term \\
    $\cdf$ & $\mathbb{R} \rightarrow \mathbb{R}$ & A strict cumulative density function of reward values under $\pzero$ \\
    $N$ & $\mathbb{Z}^+$ & Number of samples used in \bon algorithm\\
    $M$ & $\mathbb{Z}^+$ & Number of samples used in the MC estimator\\
    \bottomrule
    \end{tabular}
    \caption{A summary of the notation used in the paper}
    \label{tab:notation}
\end{table}
\section{Related Work}
\paragraph{\beston.} \bon is a straightforward alignment-via-inference algorithm to optimize the output of the language model using a trained reward model \citep{charniak-johnson-2005-coarse, summarizehf}. Despite its simplicity, \bon performs comparably or even better than other alignment methods, such as RLHF and direct preference optimization \citep{webgpt, pmlr-v202-gao23h, dpo}. However, as noted by \citet{summarizehf}, \bon is an inefficient algorithm due to the reduced throughput at inference time.
\paragraph{Applications.} \bon has been applied successfully at various stages of the development of language models. \citet{llama2, dong2023raft} employ iterative supervised fine-tuning on the outputs of the \bon algorithm to clone its behavior in the model. \citet{westofn} leverage \bon to enhance reward modeling by training the reward model on both the best and worst responses. Additionally, \citet{brown2024largelanguagemonkeysscaling, snell2024scalingllmtesttimecompute} explore the scaling laws for alignment-via-inference methods and demonstrate how to utilize the limited inference budget to achieve the alignment.
\paragraph{\beston as an alignment-via-fine-tuning method.} Two concurrent efforts to ours have also attempted to convert \bon to an alignment-via-fine-tuning method. First, \citet{bonbon} approximate the \bon by maximizing the likelihood of the \beston response and adjusting the relative likelihood of the \beston and the Worst-of-$N$ response. Second, \citet{bond}, similar to ours, uses reinforcement learning to minimize the distance between the language model and the \bon policy. Different from ours, and to reduce the fine-tuning time, the authors use a crude estimation of $\log F$ and approximate the distance to \beston by iteratively distilling the Best-of-2 model as a moving anchor. 
\section{Proof of \Cref{proposition:bondist}} \label{sec:bondist}
\bondist*
\begin{proof}
The proof follows \citet[Theorem 5.4.3]{casella_berger_2001}. To compute $\bonpolicy(\str)$, we first define two events: (i) the event that all $N$ samples have rewards less than or equal to $\reward(\str)$, and (ii) the event that all $N$ samples have rewards less than $\reward(\str)$. The probability of those events is as follows:\footnote{The PMF of \bon is also derived by \citet[Lemma 1;][]{beirami2024}. In their notation, $p_1=\mathcal{F}$ and $p_2=\mathcal{F}^{-1}$.}
\begin{subequations}
\begin{align}
    &p_1(\str) \defeq \prob(\text{all $N$ samples have rewards $ \leq \reward(\str)$}) = \Big(\cdf\big(\reward(\str)\big) + \pzero(\str)\Big)^N \label{eq:onetoone}\\
    &p_2(\str) \defeq \prob(\text{all $N$ samples have rewards $ < \reward(\str)$}) = \cdf\big(\reward(\str)\big)^N.
\end{align}
\end{subequations}
Note that for \Cref{eq:onetoone} to hold, we need the assumption that the reward function is a one-to-one mapping.\footnote{If the reward function is not a one-to-one mapping, we need to devise a tie-breaking strategy. See \Cref{sec:mapping} for further discussion.} Furthermore, given this assumption, $\bonpolicy(\str)$ is the probability that \emph{at least} one of the sampled strings out of $N$ samples have the reward exactly equal to $\reward(\str)$ and the rest of the samples have rewards less than or equal to $\reward(\str)$. Given how we defined $p_1$ and $p_2$, we have $\bonpolicy(\str) = p_1(\str) - p_2(\str)$.
\begin{equation}
    \bonpolicy(\str) = \Big(\cdf\big(\reward(\str)\big) + \pzero(\str)\Big)^N - \cdf\big(\reward(\str)\big)^N =  \sum_{i=1}^N {N \choose i} \cdf\big(\reward(\str)\big)^{N-i} \pzero(\str)^i.
\end{equation}
\end{proof}

\section{Strategies for Non-Injective Reward Functions}
\label{sec:mapping}
If the reward function is not injective, we need a tie-breaking strategy for the \bon algorithm. We formalize this as defining a total order $\rorder$ on $\alphabet^*$ as follows: for any two strings $\str_1$ and $\str_2$, if $\reward(\str_1) < \reward(\str_2)$ then we have $\str_1 \rorder \str_2$. If $\reward(\str_1) = \reward(\str_2)$ then $\str_1 \rorder \str_2$ only if $\str_1 \prec \str_2$, where $\prec$ is some arbitrary but fixed total order, e.g., lexicographic order. Therefore, we define $\cdf(\str)$ as
\begin{equation}
    \cdf(\str) \defeq \prob\big(\str' \rorder \str \big).
\end{equation}
We then need to define the two events and their probabilities, $p_1$ and $p_2$, given this total order on strings, as follows:
\begin{subequations}
    \begin{align}
    &p_1(\str) \defeq \prob(\text{all $N$ samples are }\rordereq \str) = \Big(\cdf\big(\str\big) + \pzero(\str)\Big)^N \\
    &p_2(\str) \defeq \prob(\text{all $N$ samples are } \rorder \str) = \cdf\big(\str\big)^N
\end{align}
\end{subequations}
The rest of the proof is the same as with the one-to-one reward functions.

\section{Proof of \Cref{theorem:bonltwo}} \label{sec:bonltwo}
\bonltwo*
\begin{proof}
First, we prove $\bonobj(\btheta) \geq \ltwo(\btheta)$.
    \begin{subequations}
    \begin{align}
    &  \kl \big(\policy  \mid\mid \bonpolicy \big) = \E_{\str \sim \policy} \Big[\log \policy(\str) - \log \bonpolicy(\str) \Big]\\
     &= \E_{\str \sim \policy} \Big[\log \policy(\str) - \log \sum_{i=1}^N {N \choose i} \cdf\big(\reward(\str)\big)^{N-i} \pzero(\str)^i \Big] \\
     &\leq \E_{\str \sim \policy} \Big[\log \policy(\str) - \log \sum_{i=1}^{N=1}   {N \choose i} \cdf\big(\reward(\str)\big)^{N-i} \pzero(\str)^i \Big] \label{eq:ineq} \\
     &\leq \E_{\str \sim \policy} \Big[\log \policy(\str) - \log N\, \cdf\big(\reward(\str)\big)^{N-1} \pzero(\str)^1 \Big]\\
     &\leq \E_{\str \sim \policy} \Big[\log \policy(\str) - \log \cdf\big(\reward(\str)\big)^{N-1} \pzero(\str) \Big] \\
     &= \E_{\str \sim \policy} \Big[\log \policy(\str) - \log \pzero(\str) - (N-1) \log \cdf\big(\reward(\str)\big) \Big]\\
     &= \kl \big(\policy \mid\mid \pzero \big) - (N-1) \E_{\str \sim \policy} \Big[\log \cdf\big(\reward(\str)\big) \Big] \defeq -\ltwo(\btheta).
    \end{align}
    \end{subequations}
    The inequality in \Cref{eq:ineq} stems from the fact that we drop positive terms in the summation and only keep the first term. Therefore, the lower bound for our objective is:
    \begin{equation}
        \bonobj(\btheta) = -\kl \big(\policy  \mid\mid \bonpolicy \big) \geq (N-1) \E_{\str \sim \policy} \Big[\log \cdf\big(\reward(\str)\big) \Big] - \kl \big(\policy \mid\mid \pzero \big).
    \end{equation}
\end{proof}
Another approach to deriving a lower bound is by using Jensen's inequality. By doing so, we arrive at the following theorem. 
\begin{restatable}{theorem}{bonlone}\label{theorem:bonlone}
Let $\alpha = \frac{(N+2)(N-1)}{2}$, $\beta=\frac{N(N+1)}{2}$, and $\gamma = \frac{N(N-1)}{2}$. 
Then, we have $\bonobj(\btheta) \geq \lone(\btheta)$, where we further define
    \begin{equation}
        \lone(\btheta) \defeq \gamma \E_{\str \sim \policy}\Big[ \log \cdf\big(\reward(\str)\big) \Big] - \alpha \entropy\big(\policy\big) - \beta \kl\big(\policy \mid\mid \pzero\big).
    \end{equation}
\end{restatable}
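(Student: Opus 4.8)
The plan is to reduce the claimed inequality $\bonobj(\btheta)\geq\lone(\btheta)$ to an elementary pointwise estimate on the \bon density. First I would rewrite the penalty term in $\lone$ using the identity $\kl\big(\policy\mid\mid\pzero\big) = -\entropy\big(\policy\big) - \E_{\str\sim\policy}\big[\log\pzero(\str)\big]$, which recasts $\lone$ as
\begin{equation}
\lone(\btheta) = \gamma\,\E_{\str\sim\policy}\big[\log\cdf(\reward(\str))\big] + (\beta-\alpha)\,\entropy\big(\policy\big) + \beta\,\E_{\str\sim\policy}\big[\log\pzero(\str)\big].
\end{equation}
The constants are chosen precisely so that $\beta-\alpha = 1$ (a one-line computation), which is exactly the coefficient of $\entropy(\policy)$ in $\bonobj$; see \Cref{eq:bonobj}. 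Substituting the \bon density from \Cref{proposition:bondist} into $\bonobj$ and subtracting $\lone$, the entropy terms therefore cancel and one is left with
\begin{equation}
\bonobj(\btheta) - \lone(\btheta) = \E_{\str\sim\policy}\Big[\log\bonpolicy(\str) - \gamma\log\cdf(\reward(\str)) - \beta\log\pzero(\str)\Big].
\end{equation}
Hence it suffices to prove the pointwise inequality $\bonpolicy(\str)\geq\cdf(\reward(\str))^{\gamma}\,\pzero(\str)^{\beta}$ for every string $\str$. (Strings with $\pzero(\str)=0$ have $\bonpolicy(\str)=0$, so both integrands are $-\infty$ there; equivalently one may simply restrict to $\policy$ absolutely continuous with respect to $\pzero$, since otherwise both sides of the theorem are $-\infty$.)

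For the pointwise bound, write $F=\cdf(\reward(\str))\in[0,1]$ and $p=\pzero(\str)\in[0,1]$. Every summand in \Cref{eq:bonprob} is nonnegative, so keeping only the $i=N$ term already gives
\begin{equation}
\bonpolicy(\str) = \sum_{i=1}^{N}{N\choose i}F^{N-i}p^{i} \;\geq\; {N\choose N}F^{0}p^{N} = p^{N}.
\end{equation}
It then remains to check $p^{N}\geq F^{\gamma}p^{\beta}$. Since $\beta-N = \gamma$ (again a one-line computation), this is equivalent to $(Fp)^{\gamma}\leq 1$, which holds because $Fp\in[0,1]$ and $\gamma\geq 0$. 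Chaining the two displays closes the argument. If instead one wants a bound that genuinely uses all the terms — matching the ``Jensen'' route hinted at above — one can drop the binomial coefficients and apply AM--GM (equivalently, Jensen for $\log$) to the $N$ quantities $F^{N-i}p^{i}$, whose product equals $F^{\gamma}p^{\beta}$ because $\gamma=\sum_{i=1}^{N}(N-i)$ and $\beta=\sum_{i=1}^{N}i$; the $i=N$ shortcut just reaches the stated constants most cleanly.

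I do not anticipate a serious obstacle: the inequality itself is trivial once the reduction is in place. The only real content is the bookkeeping of the first step — recognizing that $\alpha,\beta,\gamma$ were reverse-engineered so that the entropy coefficients match and the leftover exponents on $\cdf$ and $\pzero$ are dominated by a crude $[0,1]$ estimate. The points to be careful about are the degenerate cases $p=0$, $F=0$, and $N=1$ (where $\gamma=0$), all covered by the absolute-continuity remark together with the convention $x^{0}=1$.
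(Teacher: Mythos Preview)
Your argument is correct. Both you and the paper reduce the claim to the same pointwise estimate $\bonpolicy(\str)\geq \cdf(\reward(\str))^{\gamma}\,\pzero(\str)^{\beta}$, but you get there differently. The paper works at the level of the KL expression and bounds $-\log\sum_{i}\binom{N}{i}F^{N-i}p^{i}\leq -\sum_{i}\log\bigl[\binom{N}{i}F^{N-i}p^{i}\bigr]$ (labeled ``Jensen'', though it is really the fact that a sum of nonnegative terms each at most $1$ dominates their product), then discards the $\sum_i\log\binom{N}{i}$ contribution and reads off $\gamma=\sum_i(N-i)$, $\beta=\sum_i i$ before rearranging into the $\alpha,\beta,\gamma$ form. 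You instead first rearrange $\lone$ via $\kl=-\entropy-\E[\log\pzero]$ so the entropy coefficients visibly cancel, and then prove the pointwise bound by the cruder step of retaining only the $i=N$ summand, $\bonpolicy(\str)\geq p^{N}$, followed by $p^{N}\geq F^{\gamma}p^{\beta}$ from $\beta-N=\gamma$ and $(Fp)^{\gamma}\leq 1$. Your route is more elementary and makes transparent why the specific constants were chosen; the paper's route makes clearer that the bound aggregates information from all $N$ terms, even though, as your shortcut shows, the single $i=N$ term already suffices for the stated constants. One small caveat: your parenthetical AM--GM sketch yields $\sum_i F^{N-i}p^{i}\geq N\bigl(F^{\gamma}p^{\beta}\bigr)^{1/N}$ rather than $\geq F^{\gamma}p^{\beta}$ directly, so it needs the extra observation $F^{\gamma}p^{\beta}\leq 1$ to close; your primary $i=N$ argument does not have this wrinkle.
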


\begin{proof}
\begin{subequations}
    \begin{align}
     &   \kl \big(\policy  \mid\mid \bonpolicy \big) = \E_{\str \sim \policy} \Big[\log \policy(\str) - \log \bonpolicy(\str) \Big]\\
     &= \E_{\str \sim \policy} \Big[\log \policy(\str) - \log \sum_{i=1}^N {N \choose i} \cdf\big(\reward(\str)\big)^{N-i} \pzero(\str)^i \Big]\\
     &\leq \E_{\str \sim \policy} \Big[\log \policy(\str) - \sum_{i=1}^N \log  {N \choose i} \cdf\big(\reward(\str)\big)^{N-i} \pzero(\str)^i \Big] \label{eq:jensen}\\
     &= \E_{\str \sim \policy} \Big[\log \policy(\str) - \sum_{i=1}^N \log  {N \choose i} - \sum_{i=1}^N \log  \cdf\big(\reward(\str)\big)^{N-i} - \sum_{i=1}^N \log \pzero(\str)^i \Big]\\
     &= \E_{\str \sim \policy} \Big[\log \policy(\str) - \sum_{i=1}^N \log {N \choose i} - \log  \cdf\big(\reward(\str)\big) \sum_{i=1}^N (N-i) - \log \pzero(\str)\sum_{i=1}^N i \Big]\\
     &\leq \E_{\str \sim \policy} \Big[\log \policy(\str) - \frac{N(N-1)}{2} \log  \cdf\big(\reward(\str)\big) - \frac{N(N+1)}{2} \log \pzero(\str) \Big] \label{eq:loneineq}\\
     &= \E_{\str \sim \policy} \Big[\log \policy(\str) - \frac{N(N+1)}{2} \log \pzero(\str) - \frac{N(N-1)}{2} \log  \cdf\big(\reward(\str)\big) \Big]\\
     &= \frac{N(N+1)}{2} \kl \big(\policy \mid\mid \pzero \big) + \E_{\policy} \Big[\frac{-(N+2)(N-1)}{2}\log \policy(\str)  - \frac{N(N-1)}{2} \log  \cdf\big(\reward(\str)\big)  \Big]\\
     &= \frac{N(N+1)}{2} \kl \big(\policy \mid\mid \pzero \big) + \frac{(N+2)(N-1)}{2}\entropy\big(\policy\big) - \E_{\policy} \Big[\frac{N(N-1)}{2} \log  \cdf\big(\reward(\str)\big)  \Big]
    \end{align}
\end{subequations}
In \Cref{eq:jensen}, because $- \log(x)$ is convex for $x \ge 0$, we applied Jensen's inequality to obtain the upper bound. 
Abstracting away from the three multiplicative factors, naming them $\gamma$, $\alpha$ and $\beta$, we end up with the following function
\begin{equation}
\!\!\!\bonobj(\btheta) = - \kl \big(\policy  \mid\mid \bonpolicy \big) \geq \gamma \E_{\str \sim \policy} \log \cdf\big(\reward(\str)\big)  
- \alpha \entropy(\policy) - \beta \kl\left( \policy \mid\mid \pzero\right),
\end{equation}
which is a bound for some settings of $\gamma$, $\alpha$ and $\beta$.
\end{proof}

Importantly, $\lone$ is a looser bound compared to $\ltwo$. We formalize this in the following theorem.
\begin{theorem}
For every $\btheta \in \params$, we have $\ltwo(\btheta) \geq \lone(\btheta).$
\end{theorem}
\begin{proof}
    We prove $-\lone(\btheta) \geq -\ltwo(\btheta)$, meaning that $\ltwo$ is a tighter lower bound. According to \Cref{eq:loneineq}, we have:
    \begin{subequations}
    \begin{align}
        -\lone(\btheta) &\geq \E_{\str \sim \policy} \Big[\log \policy(\str) - \sum_{i=1}^N \log  \cdf\big(\reward(\str)\big)^{N-i} \pzero(\str)^i \Big] \\
        & \geq \E_{\str \sim \policy} \Big[\log \policy(\str) - \sum_{i=1}^{N=1} \log  \cdf\big(\reward(\str)\big)^{N-i} \pzero(\str)^i \Big] \\
        & = \E_{\str \sim \policy} \Big[\log \policy(\str) -  \log \cdf\big(\reward(\str)\big)^{N-1} \pzero(\str) \Big] = -\ltwo(\btheta).
    \end{align}
    \end{subequations}
\end{proof}

\section{\vbon Pseudocode}
\begin{algorithm}[H]
\caption{The \vbon algorithm}
\label{alg:vbon}
\begin{algorithmic}[1]
\footnotesize
\Procedure{\vbon}{$\pzero$, $\reward$, $N$, $E$, $B$} \Comment{$\dataset$: the prompt dataset, $E$: number of epochs, $B$ batch size}
\State Initialize $\policy$ with $\pzero$
\For{$E$ epochs}
\For{each batch in $\dataset$}
\State $\str^{(1)}, \dots, \str^{(B)} \sim \policy(\cdot)$ \Comment{Sample $1$ response for each prompt in the batch}
\State Compute $\reward(\str^{(1)}), \dots, \reward(\str^{(B)})$
\State Compute $\cdf \big(\reward(\str^{(1)}) \big), \dots, \cdf \big(\reward(\str^{(B)}) \big)$ 
\State Optimize $\policy$ with \Cref{eq:bonobj} (or \Cref{eq:l2}) using PPO
\EndFor
\EndFor
\State \Return $\policy$
\EndProcedure
\end{algorithmic}
\end{algorithm}
\section{Experimental Details} \label{sec:hyper}
\paragraph{Hyperparameter sweep in the sentiment experiment.} To visualize the trade-off between the expected rewards and KL divergence, we vary the degree of the visualization using the following hyperparameters for each method:
\begin{itemize}
    \item \textbf{\bon-SFT}: $N \in [10, 50, 90, 130, 170, 210, 250, 290, 330, 370, 410, 450, 490, 530, 570, 600]$ with $2$ different seeds, resulting in $32$ runs.
    \item \textbf{PPO}: $\beta \in [0.005, 0.01, 0.02, 0.03, 0.04, 0.05, 0.1, 0.2, 0.3, 0.4, 0.5, 1., 2., 3., 4., 5.]$ with $2$ different seeds, resulting in $32$ runs.
    \item \textbf{DPO}: $\beta \in [0.01, 0.1, 0.2, 0.3, 0.4, 0.5, 1., 2., 3., 4., 5.]$ with $3$ different seeds, resulting in $33$ runs.
    \item \textbf{BoNBoN} and \textbf{\vbon}: $N \in [1, 2, 3, 4, 8, 16, 32, 64, 128, 256, 512]$ with $3$ different seeds, resulting in $33$ runs.
    \item \textbf{\vbon} with $\ltwo$ bound: $\beta \in [0.005, 0.01, 0.02, 0.03, 0.04, 0.05, 0.1, 0.2, $
    $0.3, 0.4, 0.5, 1., 2., 3., 4., 5.]$ with $2$ different seeds, resulting in $32$ runs. Note that comparing \Cref{eq:bonobj} and \Cref{eq:rlobj}, we have $N = \frac{1}{\beta} + 1$.
\end{itemize}

\paragraph{PPO hyperparameters.} In \Cref{tab:ppoparam}, we include the hyperparameters used with the PPO algorithm for the summarization experiment.
\begin{table}[t] \label{tab:ppoparam}
\begin{tabular}{@{}ll@{}}\toprule
\textbf{Hypterparameter} & \textbf{Value} \\ \midrule
Episodes & $10000$ \\
Optimizer & AdamW ($\epsilon = 1e-5$, \texttt{lr}$=3e-6$) \\
Scheduler & Linear \\
Batch Size & $32$ \\
$\beta$ (Both for \vbon and KL-constrained RL objective) & $0.05$ \\
$\gamma$ (Discount Factor) & $1$ \\
$\lambda$ (for GAE) & $0.95$ \\
Number of PPO Update Iteration Per Epoch & $4$ \\
PPO's Policy Clipping Coefficient & $0.2$ \\
Value Clipping Coefficient & $0.2$ \\
Value Function Coefficient & $0.2$ \\
Value Function Loss Clipping & True \\
Sampling Temperature & $0.7$ \\
\bottomrule
\end{tabular}\end{table}

\begin{figure*}[t]
     \centering
     \begin{subfigure}[b]{0.49\textwidth}
         \includegraphics[width=\linewidth]{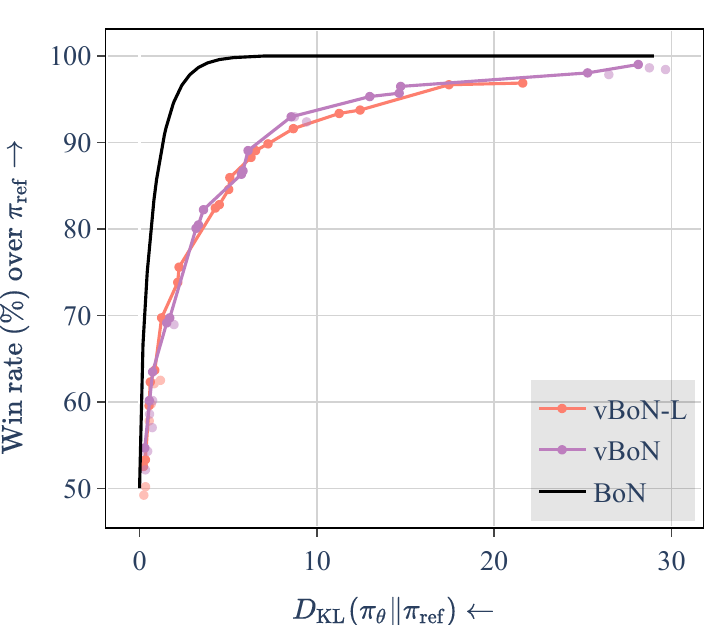}
     \end{subfigure}
     ~
     \begin{subfigure}[b]{0.49\textwidth}
         \includegraphics[width=\linewidth]{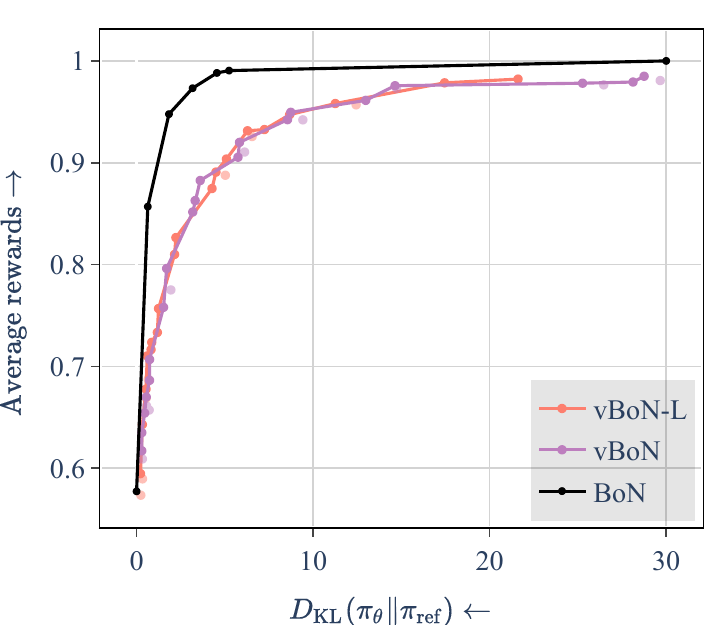}
     \end{subfigure}
    \caption{Comparing models trained with the \vbon objective and its lower bound ($\ltwo$). We observe that the performance of the two methods is very close to each other.}
    \label{fig:l2}
\end{figure*}
\begin{figure*}[t]
     \centering
     \begin{subfigure}[b]{0.49\textwidth}
         \includegraphics[width=\linewidth]{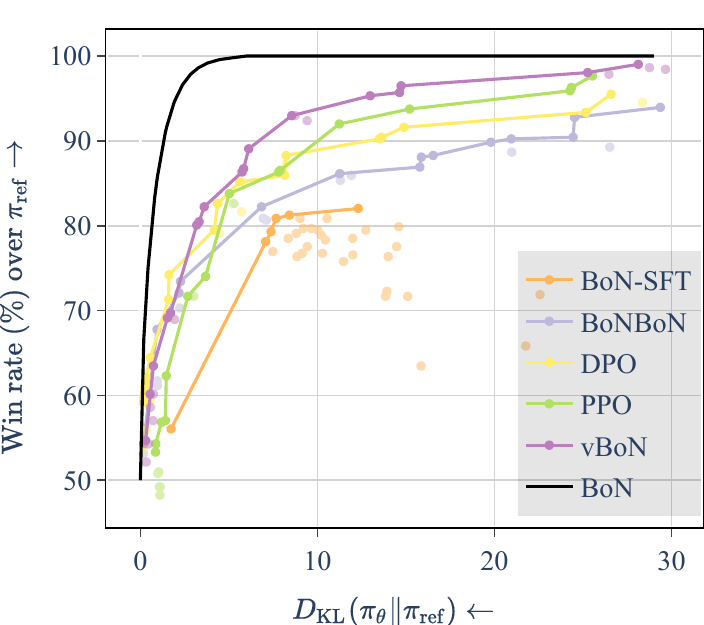}
         \caption{$4\%$ of points on Pareto front belong to BoNBoN, $4\%$ to PPO, $42\%$ to DPO, and $50\%$ to \vbon.} %
         \label{fig:imdb-winratesft}
     \end{subfigure}
     ~
     \begin{subfigure}[b]{0.49\textwidth}
         \includegraphics[width=\linewidth]{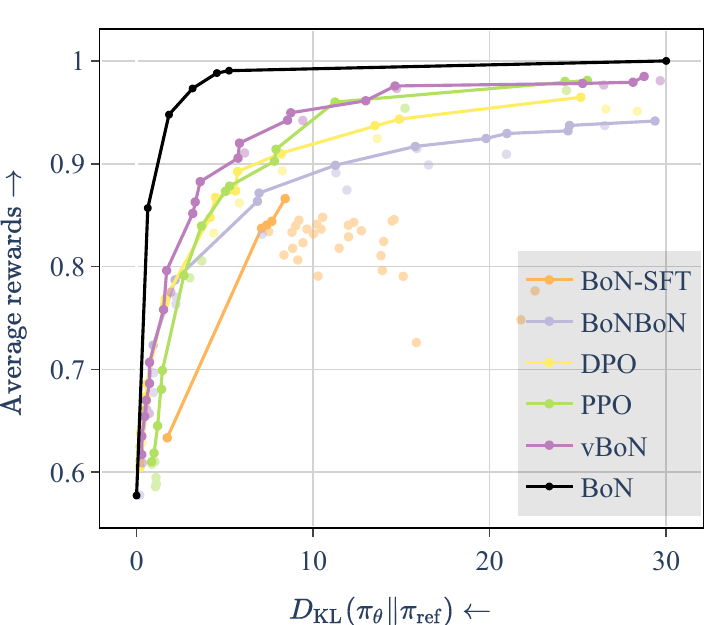}
         \caption{$7\%$ of points on Pareto from belong to BoNBoN, $10\%$ DPO, $33\%$ PPO, and $50\%$ \vbon.} \label{fig:imdb-rewardssft}
     \end{subfigure}
    \caption{Steering generated movie reviews towards positive sentiment. Points that are not on the Pareto front have lower opacity.}
    \label{fig:imdb-sft}
\end{figure*}

\section{Comparing the \vbon Objective and $\ltwo$ Lower Bound} \label{sec:boundscompare}
We compare the performance of models fine-tuned with the \vbon objective and its lower bound ($\ltwo$) in \Cref{fig:l2}. We observe that the performance of the models is very close to each other.
\section{Additional Experiments with \bon-SFT} \label{sec:sft-exp}
We further experiment with training with the maximum likelihood objective on \bon generations when varying $N$. The results are depicted in \Cref{fig:imdb-sft}. We observe that \bon diverges too much from the reference model compared to other fine-tuning methods for alignment. 

\section{Qualitative Results} \label{sec:qualitative}
\begin{table}[t]
\captionsetup{font=small}
\caption{An example of summaries sampled at temperature $0.25$ and their corresponding reward obtained from the evaluator reward model.}
\fontsize{9pt}{9pt}\selectfont
\centering 
\adjustbox{max width=12cm}{%
\begin{tabular}{@{}p{11cm}l@{}}\toprule
Content & Reward \\ \midrule
SUBREDDIT: r/relationship\_advice

TITLE: Stuck in a rut and in need of advice/inspiration!

POST: My boyfriend and I have been together for 3 years, and living together for 2. I'm quite the homebody, and when we first met, he was very outgoing and loved partying and socialising (although he was a student at the time). We're both working now, and most nights we find ourselves doing the same things: watching series (luckily we enjoy the same shows), playing Minecraft or playing various board games. We're tired after work, and can't bring ourselves to leave the house. The weekend is much the same -- lots of sleep, or sitting around staring at one screen or another. We do party occasionally (we'll head to a pub once every few months) and there are a few mutual friends we enjoy spending time with, but I worry that we've become stuck in our boring ways. I really enjoy our lifestyle, and would be quite happy to never leave the house again, but I'm starting to feel guilty for turning him into a 50 year-old when he's only 24. 
Any ideas for shaking things up a little? Bear in mind that we live in a small town in South Africa, and neither of us has a car. & - \\ \midrule 
{\color{SFT} SFT}: I'm stuck in a rut with my boyfriend, and I'd like to shake things up a little. Any ideas? & 5.83 \\
{\color{PPO} PPO}: In need of inspiration for moving past boring routines and becoming active and open minded individuals. Any ideas welcome! & 4.88 \\
{\color{VBON} \vbon}: Been dating for three years and living together for two. Feeling stuck in rut after working hard all week. Looking for ideas to shake things up! & 7.39 \\
{\color{BON} \bon}: Been together for 3 years, and living together for 2. We're both working now, and find ourselves doing the same things: watching series, playing games, or having sex. We're tired after work, and can't bring ourselves to leave & 7.14 \\ 
\bottomrule
\end{tabular}
}
\end{table}

\begin{table}[t]
\captionsetup{font=small}
\caption{An example of summaries sampled at temperature $0.5$ and their corresponding reward obtained from the evaluator reward model.}
\fontsize{9pt}{9pt}\selectfont
\centering 
\adjustbox{max width=12cm}{%
\begin{tabular}{@{}p{11cm}l@{}}\toprule
Content & Reward \\ \midrule
SUBREDDIT: r/relationship\_advice

TITLE: Stuck in a rut and in need of advice/inspiration!

POST: My boyfriend and I have been together for 3 years, and living together for 2. I'm quite the homebody, and when we first met, he was very outgoing and loved partying and socialising (although he was a student at the time). We're both working now, and most nights we find ourselves doing the same things: watching series (luckily we enjoy the same shows), playing Minecraft or playing various board games. We're tired after work, and can't bring ourselves to leave the house. The weekend is much the same -- lots of sleep, or sitting around staring at one screen or another. We do party occasionally (we'll head to a pub once every few months) and there are a few mutual friends we enjoy spending time with, but I worry that we've become stuck in our boring ways. I really enjoy our lifestyle, and would be quite happy to never leave the house again, but I'm starting to feel guilty for turning him into a 50 year-old when he's only 24. 
Any ideas for shaking things up a little? Bear in mind that we live in a small town in South Africa, and neither of us has a car. & - \\ \midrule 
{\color{SFT} SFT}: I'm stuck in a rut, and need to shake things up to see if it'll work out. Any advice? & 3.08 \\
{\color{PPO} PPO}: In need of inspiration to break out of rut and live life fully! Any ideas welcome! & 4.59 \\
{\color{VBON} \vbon}: Been happily living together for 2yr+, feeling bored after work regularly, looking for ideas to spice things up! & 6.79 \\
{\color{BON} \bon}: My boyfriend and I have been together for 3 years, and are both working full time. We spend most of our time in the house, and have become boring. What can we do to shake things up? & 9.18 \\ 
\bottomrule
\end{tabular}
}
\end{table}

\begin{table}[t]
\captionsetup{font=small}
\caption{An example of summaries sampled at temperature $0.75$ and their corresponding reward obtained from the evaluator reward model.}
\fontsize{9pt}{9pt}\selectfont
\centering 
\adjustbox{max width=12cm}{%
\begin{tabular}{@{}p{11cm}l@{}}\toprule
Content & Reward \\ \midrule
SUBREDDIT: r/relationship\_advice

TITLE: Stuck in a rut and in need of advice/inspiration!

POST: My boyfriend and I have been together for 3 years, and living together for 2. I'm quite the homebody, and when we first met, he was very outgoing and loved partying and socialising (although he was a student at the time). We're both working now, and most nights we find ourselves doing the same things: watching series (luckily we enjoy the same shows), playing Minecraft or playing various board games. We're tired after work, and can't bring ourselves to leave the house. The weekend is much the same -- lots of sleep, or sitting around staring at one screen or another. We do party occasionally (we'll head to a pub once every few months) and there are a few mutual friends we enjoy spending time with, but I worry that we've become stuck in our boring ways. I really enjoy our lifestyle, and would be quite happy to never leave the house again, but I'm starting to feel guilty for turning him into a 50 year-old when he's only 24. 
Any ideas for shaking things up a little? Bear in mind that we live in a small town in South Africa, and neither of us has a car. & - \\ \midrule 
{\color{SFT} SFT}: Boyfriend and I are stuck in a rut. We don't do anything fun, and I'm starting to feel guilty for not even trying to get away from it all. & 4.41 \\
{\color{PPO} PPO}: In rutting boredom with boring life; any ideas for shaking things up a little? Any ideas for improving relationships? & 4.76 \\
{\color{VBON} \vbon}: Been happily cohabiting for three years with boyfriend of two years. Feeling guilty for turning him into an old man! Any ideas for shaking things up? & 6.02 \\
{\color{BON} \bon}: Boyfriend and I are a little boring, and want to shake things up so we can go out and have fun but we're too stuck on the idea of staying in all day. & 5.47 \\ 
\bottomrule
\end{tabular}
}
\end{table}

\begin{table}[t]
\captionsetup{font=small}
\caption{An example of summaries sampled at temperature $1.$ and their corresponding reward obtained from the evaluator reward model.}
\fontsize{9pt}{9pt}\selectfont
\centering 
\adjustbox{max width=12cm}{%
\begin{tabular}{@{}p{11cm}l@{}}\toprule
Content & Reward \\ \midrule
SUBREDDIT: r/relationship\_advice

TITLE: Stuck in a rut and in need of advice/inspiration!

POST: My boyfriend and I have been together for 3 years, and living together for 2. I'm quite the homebody, and when we first met, he was very outgoing and loved partying and socialising (although he was a student at the time). We're both working now, and most nights we find ourselves doing the same things: watching series (luckily we enjoy the same shows), playing Minecraft or playing various board games. We're tired after work, and can't bring ourselves to leave the house. The weekend is much the same -- lots of sleep, or sitting around staring at one screen or another. We do party occasionally (we'll head to a pub once every few months) and there are a few mutual friends we enjoy spending time with, but I worry that we've become stuck in our boring ways. I really enjoy our lifestyle, and would be quite happy to never leave the house again, but I'm starting to feel guilty for turning him into a 50 year-old when he's only 24. 
Any ideas for shaking things up a little? Bear in mind that we live in a small town in South Africa, and neither of us has a car. & - \\ \midrule 
{\color{SFT} SFT}: My boyfriend and I used to party all the time and are ending up reverting to the boring old lifestyle of a 24-year-old. How can we both break out of a rut and get some fun back in our relationship? & 5.89 \\
{\color{PPO} PPO}: Taken classic boring suburban lifestyle and turned into boring 24yo. Looking for ideas for breaking out of rut and refreshing lifestyle. Any ideas welcome! & 5.91 \\
{\color{VBON} \vbon}: boyfriend and I seem to have fallen into a rut of regularly doing mundane activities after working together non-stop for several months! ideas to spice things up? & 6.57 \\
{\color{BON} \bon}: in a relationship that's getting a bit stale, looking for some inspiration to make changes on a whim in hopes of rejuvenating it! & 6.74 \\ 
\bottomrule
\end{tabular}
}
\end{table}

\end{document}